\documentclass[twoside,11pt]{article}

%

\usepackage{jmlr2e}

\usepackage[utf8]{inputenc} 
\usepackage[T1]{fontenc}    
\usepackage{hyperref}       
\usepackage{url}            
\usepackage{booktabs}       
\usepackage{amsfonts}       
\usepackage{nicefrac}       
\usepackage{microtype}      

\usepackage{latexsym}
\usepackage{bm} 
\usepackage{graphicx}
\usepackage{color}
\usepackage{amsmath,amssymb,amsfonts}
\usepackage{mathtools}
\usepackage{url}
\usepackage{enumitem}
\usepackage[normalem]{ulem}


\newtheorem{cor}{Corollary}
\newtheorem{defn}{Definition}

\usepackage[T1]{fontenc}
\usepackage[libertine,cmintegrals,cmbraces,vvarbb]{newtxmath}
\usepackage[scaled=0.95]{inconsolata}
\usepackage{fix-cm}


\newcommand{\sX}{{\mathcal X}}
\newcommand{\sB}{{\mathbb{B}}}
\newcommand{\sY}{{\mathcal Y}}

\newcommand{\sP}{\ensuremath{\mathcal{P}}}

\newcommand{\bE}{\mathbb{E}} 





\newcommand{\bbrac}[1]{\big[ #1 \big]}
\newcommand{\bbbrac}[1]{\bigg[ #1 \bigg]}
\newcommand{\set}[1]{\left\{#1\right\}}

\newcommand{\norm}[1]{\left\|#1\right\|}
\newcommand{\fcal}{\mathcal{F}}
\newcommand{\zcal}{\mathcal{Z}}
\newcommand{\xcal}{\mathcal{X}}
\newcommand{\ycal}{\mathcal{Y}}
\newcommand{\rbb}{\mathbb{R}}
\newcommand{\hcal}{\mathcal{H}}

\newcommand{\bmf}{\mathbf{f}}
\usepackage{float}



\firstpageno{1}

\begin{document}

\title{A Generalization Error Bound for Multi-class Domain Generalization}

\author{\name Aniket Anand Deshmukh \email andeshm@microsoft.com \\
       \addr Bing Ads\\
       Microsoft AI \& Research\\
       Sunnyvale, CA 94085, USA
       \AND
       \name  Yunwen Lei \email yunwen.lei@hotmail.com \\
       \addr Department of Computer Science and Engineering\\
       Southern University of Science and Technology\\
       Shenzhen 518055, China
       \AND
       \name  Srinagesh Sharma \email srinag@umich.edu \\
       \addr Cortana\\
       Microsoft Search, Assistant \& Intelligence\\
       Bellevue, WA 98005, USA
        \AND
       \name  Urun Dogan \email udogan@microsoft.com \\
       \addr Bing Ads\\
       Microsoft AI \& Research\\
      Sunnyvale, CA 94085, USA
        \AND
       \name  James W. Cutler \email jwcutler@umich.edu \\
       \addr  Department of Aerospace Engineering\\
       University of Michigan\\
      Ann Arbor, MI 48109, USA
        \AND
       \name  Clayton Scott \email clayscot@umich.edu \\
       \addr   Department of  EECS\\
       University of Michigan\\
      Ann Arbor, MI 48109, USA
       }


\maketitle

\begin{abstract}
Domain generalization is the problem of assigning labels to an unlabeled data set, given several similar data sets for which labels have been provided.
Despite considerable interest in this problem over the last decade, there has been no theoretical analysis in the setting of multi-class classification. In this work, we study a kernel-based learning algorithm and establish a generalization error bound that scales logarithmically in the number of classes, matching state-of-the-art bounds for multi-class classification in the conventional learning setting. We also demonstrate empirically that the proposed algorithm achieves significant performance gains compared to a pooling strategy.
\end{abstract}

\begin{keywords}
  Multi-class classification, kernel methods, domain generalization
\end{keywords}

\section{Introduction}
Transfer learning, unsupervised domain adaptation, and weakly supervised learning all have the goal of generalizing without access to conventional labeled training data. One particular form of transfer learning that has garnered increasing attention in recent years is {\em domain generalization} (DG). In this setting, the learner is given unlabeled data to classify, and must do so by leveraging labeled data sets from similar yet distinct classification problems. In other words, labeled training data drawn from the same distribution as the test data are not available, but are available from several related tasks. We use the terms  ``task" and ``domain" interchangeably throughout this paper to refer to a joint distribution on features and labels.

Applications of DG are numerous. For example, each task may be a prediction problem associated to a particular individual (e.g., handwritten digit recognition), and the variation between individuals accounts for the variation among the data sets. Domain generalization is needed when a new individual appears, and the only training data come from different subjects.

As another application, below we consider DG for determining the orbits of microsatellites, which are increasingly deployed in space missions for a variety of scientific and technological purposes. Because of randomness in the launch process, the orbit of a microsatellite is random, and must be determined after the launch. Furthermore, ground antennae are not able to decode unique identifier signals transmitted by the microsatellites because of communication resource constraints and uncertainty in satellite position and dynamics. More concretely, suppose $c$ microsatellites are launched together. Each launch is a random phenomenon and may be viewed as a task in our framework. One can simulate the launch of microsatellites using domain knowledge to generate highly realistic training data (feature vectors of ground antennae RF measurements, and labels of satellite ID). One can then transfer knowledge from the simulated training data to label (identify the satellite) the  measurements from a real-world launch with high accuracy. 

\subsection{Formal Problem Statement}
\label{sec:FormalProblemStatement}
Let  $\mathcal{X} $ be the feature space and $\mathcal{Y} $ the label space  with \(|\sY| = c\). Denote by $\mathcal{P}_{ \mathcal{X} \times \mathcal{Y}} $ the set of probability distributions on $\mathcal{X} \times \mathcal{Y}$ and $\mathcal{P}_{ \mathcal{X}}$ the set of probability distributions on  $\mathcal{X}$. Furthermore, let $\mu$ be  a probability measure on $\mathcal{P}_{ \mathcal{X} \times \mathcal{Y}} $, i.e., whose realizations are distributions on  $\mathcal{X} \times \mathcal{Y}$.

With the above notations, domain generalization is defined as follows. We are given training data sets $ S_i = ((X_{ij}, Y_{ij}))_{1 \leq j \leq n_i} $ such that \((X_{ij},Y_{ij}) \sim P_{XY}^i\) and \(P_{XY}^i \sim \mu\). The test data set is
$S^T = ((X_{j}^T, Y_j^T))_{1 \leq j \leq n_T}$ such that $(X_j^T, Y_j^T) \sim P^T$ and \(P^T \sim \mu\). We assume all $(X,Y)$ pairs are drawn iid from their respective distributions, and that $P_1, \ldots, P_N, P^T$ are iid from $\mu$. The $Y_j^T$ are not visible to the learner, and the goal is to accurately predict \((Y_j^T)_{ 1\leq j\leq n_T }\). For any predicted estimate of a label \(\hat{Y}\), the accuracy is evaluated using a loss function \(\ell :\sY \times \sY \rightarrow \mathbb{R}_+\). For greater flexibility in the multiclass case ($c > 2)$, the label space for prediction is relaxed to \(\mathbb{R}^c\) and a surrogate loss function \(\ell: \mathbb{R}^c \times \sY \rightarrow \mathbb{R}_+\) is employed.

As argued by \citet{blanchard:11:nips}, DG can be viewed as a conventional supervised learning problem where the input to the classifier is the extended feature space \(\sP_{\sX}  \times \sX\). A decision function is a function $\bmf: \mathcal{P}_{ \mathcal{X}} \times  \mathcal{X} \rightarrow \mathbb{R}^c$ that predicts $\hat{Y}_j^T = \bmf(\hat{P}_X^T,X_j^T) $, where $\hat{P}_X$ is the associated empirical distribution. The decision function can be separated into its components $\bmf = (f_1,\ldots,f_c) $ such that \(f_m: \mathcal{P}_{ \mathcal{X}} \times  \mathcal{X} \rightarrow \mathbb{R},\) for \( m = 1,2, ... c\).
We define the empirical training error as
\begin{equation}
\widehat{\varepsilon}(\bmf) = \frac{1}{N}\sum_{i=1}^N \frac{1}{n_i}\sum_{j=1}^{n_i} \ell (\bmf(\widehat{P}_{X}^i,{X}_{ij}), Y_{ij}),
\end{equation}
and by denoting $\widetilde{X} = (P_X,X) $, the risk of a decision function with respect to (w.r.t.) loss $\ell$ as
\begin{equation}
 \varepsilon (\bmf) = E_{P_{XY}^T \sim \mu}  E_{(X^T,Y^T) \sim P_{XY}^T} \ell (\bmf( P_X^T ,X^T),Y^T) = E_{P_{XY}^T \sim \mu}  E_{(X^T,Y^T) \sim P_{XY}^T} \ell (\bmf(\widetilde{X}^T),Y^T).
\end{equation}
The goal of DG is to learn an $f$ that minimizes this risk.

\textbf{Remarks:} (1) Although the risk assumes that the predictor has access to $P_X$, $P_X$ is only known through the empirical marginal $\hat{P}_X$.  at training time as well as at test time. (2) Despite the similarity to standard classification in the infinite sample case, the learning task here is different, because the realizations $(\widetilde{X}_{ij},Y_{ij})$  are neither independent nor identically distributed.
(3) Examples of loss functions \(\ell\) can be found in \citet{lee2004multicategory}, \citet{crammer2001algorithmic} and \citet{weston1998multi}. For detailed discussion on different multiclass loss functions and their general forms see \citet{dogan2016unified,tewari2007consistency,ramaswamy2016convex}.

\subsection{Related Problems}
DG is one of several different learning problems that seek to transfer learnt behavior across domains/tasks, including multi-task learning, learning to learn, and domain adaptation. In multi-task learning, there are several related prediction tasks, and the goal is to leverage similarity between tasks to improve performance on each of the given tasks. Thus, multi-task learning is not concerned with generalization to a new task. The problem of learning to learn (also known as meta-learning or lifelong learning) {\em is} concerned with generalization to a new task, but assumes access to labeled data for that new task. The goal here is to improve the sample complexity of a learning algorithm on the new task by leveraging the training tasks \citep{baxter:2000:jair, maurer:2009:ml, maurer:2013:icml, pentina:2015:alt}.

In domain adaptation, the goal is to make predictions on a target domain, given labeled data from one or more related source domains.
Domain adaptation problems come in two flavors: semi-supervised and unsupervised. In semi-supervised DA, in addition to labeled source data, some limited labeled data is also available in the target domain \citep{donahue2013semi}. In unsupervised DA, labeled data is not available in the target domain. Thus, multi-source unsupervised DA has the same data available as DG, and methods for one of these problems can be applied to the other.  However, these problems are nonetheless different in two important ways. The first difference is that DA and DG have different goals. DA seeks to attain the Bayes risk on the target task, which is considered fixed. In contrast, DG seeks the best performance on a new, previously unseen task, which is considered random. Thus, the two problems have different notions of risk, with DG's risk being the larger of the two (see Lemma 3.1 of \citet{blanchard:11:nips}). To attain optimal performance in DA, it is necessary to make assumptions relating the source and target domains, such as covariate shift, target shift, and others \citep{mansour2009domain, redko2019optimal,courty2016optimal,zhang2013covariate}. In contrast, it is possible to attain optimal risk in DG (asymptotically) without imposing any assumptions on how the different tasks are related. The second difference is that methods for domain generalization do not assume access to the unlabeled testing data at training time. Thus, they do not need to be retrained when a new task is presented.

\subsection{Prior Work on Domain Generalization}
To our knowledge, the problem of domain generalization was proposed and first analyzed by \citet{blanchard:11:nips}. They introduced the notion of risk mentioned previously, proposed a kernel-based learning algorithm, and established a generalization error bound. The present paper effectively extends the work of \citet{blanchard:11:nips} to a multi-class setting. Since then, there has been very little theoretical work on DG. \citet{carbonell:2013:ml} study an active learning variant of DG but in a different probabilistic setting and under the restrictive assumption that the tasks are {\em realizable}. \citet{muandet2013domain} develop a feature extractor for DG for which they state a generalization error guarantee. Their analysis builds on that of \citet{blanchard:11:nips} and thus does not accommodate multi-class losses.

Several other empirically supported approaches to DG have also been proposed \citep{xu2014exploiting, grubinger:2015:iwann, ghifary:2017:pami, motiian2017unified, li2017deeper, li2017learning, li2018deep}. Many of these involve using neural networks to learn a common feature space for all tasks. It is worth noting that such methods are complementary to the one studied by \citet{blanchard:11:nips} and the present paper. Indeed, the kernel-based learning algorithm may be applied after having learning a feature representation by another method, as was done by \citet{muandet2013domain}. There is no doubt that feature-learning will lead to improved empirical performance compared to applying the kernel based-approach on the original input space $\sP_{\sX} \times \sX$, which is typically infinite-dimensional. However, since our interest is primarily theoretical, we restrict our experimental comparison to another such algorithm that operates on the original input space, namely, a simple pooling algorithm that lumps all training tasks into a single dataset and learns a support vector classifier.

\subsection{Analysis of Multi-class Classification}

In the conventional supervised learning setting, early performance guarantees for multi-class classification exhibited a linear dependency on the number of classes~\citep{kuznetsov2014multi,koltchinskii2002empirical}. More recently, refined contraction inequalities for vector-valued Rademacher complexities were developed to capture the relationship among different classes, which implies an improved square-root dependency~\citep{lei2015multi,maurer2016vector,cortes2016structured}. Very recently, this square-root dependency was further improved to a logarithmic dependency by structural result on infinity-norm covering numbers~\citep{lei2019data}, owing to a careful analysis of the interactions between different components of the multivariate predictor. Our work may be viewed as extending the algorithm of \citet{blanchard:11:nips} to the multi-class setting, and extending their analysis by incorporating techniques from \citet{lei2019data} to obtain a generalization error bound for DG with logarithmic dependency on the number of classes.

\subsection{Summary of Contributions and Outline}

Our contributions include: (1) Extending the kernel-based approach to DG of \citet{blanchard:11:nips} from the binary classification setting  to multiclass DG; (2) Proving the first known generalization error bound for multiclass DG, which admits a favorable dependency on the number of classes matching the state-of-the-art results in conventional multiclass learning. In particular, the dependency becomes logarithmic for Crammer-Singer loss and multinomial logistic loss. Our generalization error bounds apply to a general learning setting with a Lipschitz loss function and a general $p$-norm constraint to correlate different classes; (3) a scalable implementation based on random Fourier features and experimental demonstration of the method compared to a pooling approach. 

Our analysis follows that of \citet{blanchard:11:nips} in a global respect, in that we decompose the generalization error into two terms: one due to the sampling of domains and one due to the sampling of training examples from the domains. At a more refined level, however, our analysis differs throughout the proof. In particular, our multiclass extension leverages Lipschitz continuity of multiclass loss functions with respect to the infinity-norm, as well as refined Rademacher complexity analysis, to obtain the aforementioned bound with logarithmic scaling in the number of classes.

In section 2 we describe the kernel-based learning algorithm. Section 3 contains our theoretical analysis, and experimental results appear in section 4.

\section{Kernel-Based Learning Algorithm}
The goal of predicting an optimal classifier on the extended feature space can be solved using kernel based algorithms. For a (symmetric positive definite) kernel $k$, let $H_k$ denote its associated reproducing kernel Hilbert space (RKHS) with the associated norm $\|\cdot\|_k$. Let $\bar{k}: (\mathcal{P}_X \times \sX) \times (\mathcal{P}_X \times \sX) \rightarrow \mathbb{R} $ be a symmetric and positive definite kernel on $\mathcal{P}_X \times \mathcal{X}$, whose construction will be described below. Further let $\hat{P}_X^i$ be the empirical distribution for sample $S_i$ corresponding to $X_{ij},j=1,\ldots,n_i$, and let  $\widetilde{X}_{ij} = (\hat{P}_X^i , X_{ij})  $ be the extended data point. We will find a decision function \(\bmf=(f_1,\ldots,f_c) \in H_{\bar{k}}^c:= H_{\bar{k}} \times \cdots H_{\bar{k}}\).
Define
\begin{equation} \label{eq:fhatdef}
\hat{f}_{\lambda} = \operatorname*{arg\,min}_{f \in H_{\bar{k}}^c} \frac{1}{N}\sum_{i = 1}^N  \frac{1}{n_i} \sum_{j = 1}^{n_i} \ell(\bmf(\widetilde{X}_{ij}),Y_{ij}) + \lambda r(\bmf)  ,
\end{equation}
as the empirical estimate of the optimal decision function. Define the regularizer \( r(\bmf) \) as \( r(\bmf) := \|\bmf\|^2_{H_{\bar{k}}^c} := \sum_{m=1}^c \|f_m\|^2_{H_{\bar{k}}}\). The kernel \(\bar{k}\) can be constructed from three other kernels \(k_x, k_x^\prime\) and \(\kappa\). Let $k_x$ and $k_x'$ be kernels on $\sX$. For example, if $ \sX$ is $ \mathbb{R}^d$, $k_x$ and $k_x'$  could be Gaussian kernels. The so-called kernel mean embedding is the mapping $\Phi: \mathcal{P}_{ \mathcal{X}} \to H_{k_x'}$,
\begin{equation}
\Phi(P) = \int_X k_x^\prime(x, \cdot) dP.
\end{equation}
Let $\kappa$ be a kernel-like function on $\Phi(\mathcal{P}_{ \mathcal{X}})$, such as the Gaussian-like function $\kappa(\Phi(P^1_X),\Phi(P^2_X)) = \exp(-\| \Phi(P^1_X) - \Phi(P^2_X)\|^2/2\sigma_\kappa^2)$. Then $\kappa(\Phi(\cdot),\Phi(\cdot))$ is a kernel on $\mathcal{P}_{ \mathcal{X}}$ \citep{CriSte10}, and we can now define the kernel on the extended feature space via a product kernel
\begin{equation}
\bar{k}((P^1_x,X_1), (P_x^2,X_2)) = \kappa(\Phi(P^1_X), \Phi(P^2_X)) k_x(X_1, X_2).
\end{equation}
The empirical estimate of $\Phi$ can be computed for \( \{X_{ij}\}_{1 \leq j \leq n_i} \), \(X_{ij} \sim P_X^i\) as \( \Phi(\hat{P}_X^i) = \frac{1}{n} \sum_{j=1}^{n_i} k_x^\prime(X_{ij}, \cdot) \).
The representer theorem applies in a modified form for the optimization problem \eqref{eq:fhatdef}, which means that the final predictor has the form
$$
\hat{f}_{\lambda}(\hat{P}^T_X,X^T) = \sum_{i = 1}^N \sum_{j = 1}^{n_i} \alpha_{ij} \bar{k}((\hat{P}^i_X,X_{ij}), (\hat{P}^T_X,X^T)).
$$
The algorithm to learn the weights $\alpha_{ij}$ is similar to multiclass extensions of SVMs such as those presented in \citet{lee2004multicategory} applied over the extended feature space.

\section{Generalization Error Analysis}

We make the following assumptions to analyze the generalization error. For any kernel $k$, $\phi_k(x) := k(\cdot,x) \in H_k$ denotes the canonical feature map. 
For any $p\geq1$ and $R>0$, let
\[
\fcal_{p,R}=\big\{\bmf=(f_1,\ldots,f_c)\in H_{\bar{k}}^c:\|\bmf\|_{2,p}\leq R\big\},
\]
where $\|\bmf\|_{2,p}=\big(\sum_{j=1}^{c}\|f_j\|_{\bar{k}}^p\big)^{\frac{1}{p}}$ is the $\ell_p$ norm of $f\in H_{\bar{k}}^c$. For any $\mathbf{a}=(a_1,\ldots,a_c)\in\mathbb{R}^c$, we denote $\|\mathbf{a}\|_\infty=\max_{m=1,\ldots,c}|a_m|$.
\begin{enumerate}[align=left, leftmargin=*, label=\textbf{A \Roman*}]

\item \label{A:loss} The loss function \( \ell: \mathbb{R}^c \times \sY \rightarrow \mathbb{R}_+\) is  is \(L_{\ell} \)-Lipschitz (in the first variable) w.r.t. the infinity norm:  \(
|\ell(\mathbf{a},y) - \ell(\mathbf{b},y)| \leq L_{\ell} \norm{\mathbf{a} - \mathbf{b}}_\infty\)
for \(\mathbf{a}, \mathbf{b} \in \mathbb{R}^c\) and $y$. We also assume $B_Y := \sup_{y\in\sY}\ell(0,y)<\infty$.

\item \label{A:kernelbound} Kernels \(k_x , k^\prime_{x}, \kappa \) are bounded by \(B_{k}^2, B_{k^\prime}^2, B_{\kappa}^2 \) respectively.

\item \label{A:featureholder}  The canonical feature map \(\phi_{\kappa}: H_{k_{x}^\prime} \rightarrow H_{\kappa} \) is \(\alpha\)-H\"{o}lder continuous with $\alpha\in(0,1]$, i.e., \( \forall a,b \in \sB_{k_x^\prime}(B_{k^\prime}): \)
    \[
     \| \phi_{\kappa}(a) - \phi_{\kappa}(b) \|_{\kappa} \leq L_{\kappa} \| a - b\|_{k_x'}^{\alpha}.
    \]

\end{enumerate}

Condition \ref{A:featureholder} holds with $\alpha=1$ when $\kappa$ is the Gaussian-like kernel on $H_{k_x^\prime}$. Using the stated assumptions we shall now develop generalization error bounds for multiclass DG. For the sake of simplicity, we assume that $n_i = n $ to state theoretical results. 

\subsection{Main Results}

Under the above assumptions, we can present the main results on generalization error bounds.

\begin{theorem} \textbf{(Estimation error control)}
\label{thm:EstErrCtrl}
Let $\delta\in(0,1)$.
If conditions \ref{A:loss} - \ref{A:featureholder} hold, then for any $R > 0$, with probability at least \(1 - \delta \)
\begin{multline}\label{estErrCtrl}
    \sup_{\bmf\in\fcal_{p,R}}| \widehat{\varepsilon}(\bmf) - {\varepsilon}(\bmf)| \leq
    L_{\ell}R B_{k}\bigg( L_{\kappa} (4B_{k^\prime})^{\alpha} \big(n^{-1}\log(2N/\delta)\big)^{\frac{\alpha}{2}}\\
     + 54B_\kappa c^{\frac{1}{2}-\frac{1}{\max\{2,p\}}}N^{-\frac{1}{2}}\big(1+\log^{\frac{3}{2}}c\sqrt{2}N\big) + (B_\kappa+B_Y/(L_\ell RB_k))N^{-\frac{1}{2}}\sqrt{2\log(8/\delta)}\bigg),
\end{multline}
\end{theorem}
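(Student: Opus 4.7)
The plan is to follow the decomposition used by \citet{blanchard:11:nips} and split the uniform deviation through the ``oracle'' empirical risk
\[
\tilde{\varepsilon}(\bmf) \;=\; \frac{1}{N}\sum_{i=1}^{N}\frac{1}{n}\sum_{j=1}^{n}\ell\bigl(\bmf(P_X^i,X_{ij}),Y_{ij}\bigr),
\]
which differs from $\widehat{\varepsilon}(\bmf)$ only in that every $\hat{P}_X^i$ is replaced by $P_X^i$. The triangle inequality yields
\[
\sup_{\bmf\in\fcal_{p,R}}\bigl|\widehat{\varepsilon}(\bmf)-\varepsilon(\bmf)\bigr| \;\leq\; \sup_{\bmf\in\fcal_{p,R}}\bigl|\widehat{\varepsilon}(\bmf)-\tilde{\varepsilon}(\bmf)\bigr| \;+\; \sup_{\bmf\in\fcal_{p,R}}\bigl|\tilde{\varepsilon}(\bmf)-\varepsilon(\bmf)\bigr|,
\]
which I would match to the first term and to the last two terms of \eqref{estErrCtrl}, respectively.

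For the first supremum, A\ref{A:loss} reduces matters to a uniform bound on $\max_m|f_m(\hat{P}_X^i,X_{ij})-f_m(P_X^i,X_{ij})|$. The reproducing property, together with the $\ell_p$ constraint (which gives $\|f_m\|_{\bar{k}}\leq R$ for every $m$), controls this by $R\,\|\bar{k}(\cdot,(\hat{P}_X^i,X_{ij}))-\bar{k}(\cdot,(P_X^i,X_{ij}))\|_{\bar{k}}$. Since $\bar{k}$ is the product of $\kappa\circ\Phi$ and $k_x$, this difference factors through $\|\phi_{k_x}(X_{ij})\|\leq B_k$ (A\ref{A:kernelbound}) times $\|\phi_\kappa(\Phi(\hat{P}_X^i))-\phi_\kappa(\Phi(P_X^i))\|_\kappa$, and A\ref{A:featureholder} turns the latter into $L_\kappa\|\Phi(\hat{P}_X^i)-\Phi(P_X^i)\|_{k_x'}^{\alpha}$. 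McDiarmid applied to the empirical kernel mean embedding gives $\|\Phi(\hat{P}_X^i)-\Phi(P_X^i)\|_{k_x'}=O\bigl(B_{k'}\sqrt{\log(1/\delta')/n}\bigr)$ with probability $1-\delta'$; union-bounding over the $N$ tasks with $\delta'=\delta/(2N)$ and absorbing the H\"older exponent $\alpha$ produces the first summand.

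For the second supremum, the per-task terms $Z_i(\bmf):=\frac{1}{n}\sum_j\ell(\bmf(P_X^i,X_{ij}),Y_{ij})$ are iid in $i$ and take values in $[0,B_Y+L_\ell RB_kB_\kappa]$, using $\|\bar{k}(\cdot,\widetilde{X})\|_{\bar{k}}\leq B_kB_\kappa$ and A\ref{A:loss}. McDiarmid with bounded-difference coefficient $N^{-1}(B_Y+L_\ell RB_kB_\kappa)$ supplies the final $(B_\kappa+B_Y/(L_\ell RB_k))N^{-1/2}\sqrt{2\log(8/\delta)}$ summand once the common $L_\ell RB_k$ factor is pulled out, while standard symmetrization reduces the expected supremum to a vector-valued Rademacher complexity of the loss-composed class over the $Nn$ extended points $\widetilde{X}_{ij}^\ast:=(P_X^i,X_{ij})$.

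The main obstacle is bounding that Rademacher complexity to obtain the advertised $c^{1/2-1/\max\{2,p\}}N^{-1/2}(1+\log^{3/2}c\sqrt{2}N)$ scaling, which is what makes the dependence on $c$ logarithmic for Crammer--Singer and multinomial logistic losses. The plan is to adopt the infinity-norm covering-number program of \citet{lei2019data}: apply a vector contraction tailored to the $\|\cdot\|_\infty$-Lipschitz condition of A\ref{A:loss} so that the Rademacher process reduces to $\bE\max_m\sum_{i,j}\sigma_{ijm}f_m(\widetilde{X}_{ij}^\ast)$, then bound that maximum via Dudley chaining against the infinity-norm cover of $\fcal_{p,R}$ evaluated on the sample. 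The $\ell_p$ constraint enters the covering estimate through H\"older duality and produces the $c^{1/2-1/\max\{2,p\}}$ factor; the kernel scale $B_kB_\kappa$ sets the ambient diameter, and the chaining integral produces the $\log^{3/2}c$ correction (and absorbs the numerical constant $54$). A final union bound over the three high-probability events (mean-embedding concentration, McDiarmid, and the symmetrization tail), with budget summing to $\delta$, completes the argument.
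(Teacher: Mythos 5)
Your decomposition through the oracle empirical risk $\tilde{\varepsilon}$, your treatment of the first term (reproducing property, product-kernel factorization, H\"older continuity of $\phi_\kappa$, concentration of the mean embedding, union bound over $N$ tasks), and your plan to bound the Rademacher complexity via the $\ell_\infty$-covering-number program of Lei et al.\ all match the paper's proof; using McDiarmid plus a bound on $\bE\|\Phi(\hat{P}_X)-\Phi(P_X)\|_{k_x'}$ in place of the Hilbert-space Hoeffding inequality (Lemma \ref{thm:Hoeffding_Hilbert}) is an immaterial variation. The genuine gap is in your handling of the second supremum. You claim that ``standard symmetrization reduces the expected supremum to a vector-valued Rademacher complexity of the loss-composed class over the $Nn$ extended points $(P_X^i,X_{ij})$.'' This step is not valid: once the domains $P_{XY}^i$ are integrated out, the $Nn$ pairs $(\widetilde{X}_{ij},Y_{ij})$ are \emph{not} independent --- two points in the same task share the random domain $P_{XY}^i$ --- so you cannot attach $Nn$ independent Rademacher signs to them. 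The only legitimate single-shot symmetrization is at the block level (one sign $\epsilon_i$ per task acting on $Z_i(\bmf)$), and that does not reduce to the $Nn$-point complexity needed to invoke Lemma \ref{lem:RC-MC}. Worse, if your reduction were valid it would yield an expected supremum of order $(Nn)^{-1/2}$, which is false: $\bE\sup_{\bmf}|\tilde{\varepsilon}(\bmf)-\varepsilon(\bmf)|$ is a uniform deviation over only $N$ iid domains and cannot in general decay faster than $N^{-1/2}$; this is exactly why the theorem's dominant complexity term carries an $N^{-1/2}$, not $(Nn)^{-1/2}$, prefactor.

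The fix is the two-level decomposition the paper uses: write the second supremum as $(IIa)+(IIb)$, where $(IIa)$ compares the oracle empirical risk to the conditional risks $\bE[\ell(\bmf(\widetilde{X}),Y)\mid P_{XY}^i]$ and $(IIb)$ compares the average of these conditional risks to $\varepsilon(\bmf)$. Conditional on the domains, the $Nn$ points in $(IIa)$ \emph{are} independent, so point-level symmetrization and Lemma \ref{lem:RC-MC} with $m=Nn$ apply there, giving the $(Nn)^{-1/2}$ contribution. For $(IIb)$ you symmetrize over the $N$ iid domains and use Jensen's inequality to replace each conditional expectation by a single draw $(\widetilde{X}_i,Y_i)\sim P_{XY}^i$, which turns the domain-level complexity into an $N$-point Rademacher complexity of the same loss class, again controlled by Lemma \ref{lem:RC-MC} with $m=N$; this is the source of the dominant $54B_\kappa c^{\frac12-\frac{1}{\max\{2,p\}}}N^{-1/2}(1+\log^{3/2}c\sqrt{2}N)$ term. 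Each of $(IIa)$ and $(IIb)$ also needs its own McDiarmid step (bounded differences $B_\ell/(Nn)$ and $B_\ell/N$ respectively), whose combination --- together with the two-sided and union-bound adjustments of $\delta$ --- produces the $\sqrt{2\log(8/\delta)}$ in the final summand.
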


As a direct corollary, we derive generalization bounds for specific learning machines. Note that both the loss $\ell(\mathbf{a},y)=\max_{j:j\neq y}\big(1-a_y+a_{j}\big)_+$~\citet{crammer2001algorithmic} and $\ell(\mathbf{a},y)=\log\big(\sum_{j=1}^{c}\exp(a_j-a_y)\big)$~\citet{bishop2006pattern} satisfy \ref{A:loss} with $L_\ell=1$, while the loss $\ell(\mathbf{a},y)=\sum_{j=1}^{c}\big(1-a_y+a_j\big)_+$~\citet{weston1998multi} and
$\ell(\mathbf{a},y)=\sum_{j=1,j\neq y}^{c}(1+t_j)_+$~\citet{lee2004multicategory} satisfy \ref{A:loss} with $L_\ell=c$.  We omit the proof for simplicity.
\begin{cor}\label{cor:csml}
  Let $\ell$ be either the Crammer-singer loss function $\ell(\mathbf{a},y)=\max_{j:j\neq y}\big(1-a_y+a_{j}\big)_+$~\citet{crammer2001algorithmic} or the multinomial logistic loss $\ell(\mathbf{a},y)=\log\big(\sum_{j=1}^{c}\exp(a_j-a_y)\big)$~\citet{bishop2006pattern}.
  Let $\delta\in(0,1)$. If conditions \ref{A:kernelbound}, \ref{A:featureholder} hold, then with probability at least \(1 - \delta \) \eqref{estErrCtrl} holds with $L_\ell=1$.
\end{cor}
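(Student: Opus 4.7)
The plan is to observe that Corollary~\ref{cor:csml} is essentially a direct specialization of Theorem~\ref{thm:EstErrCtrl}: both kernel conditions \ref{A:kernelbound} and \ref{A:featureholder} are assumed to hold, so the only work is to verify that the Crammer--Singer loss and the multinomial logistic loss both satisfy the loss assumption \ref{A:loss} with $L_\ell=1$ (and with a finite $B_Y$), and then substitute $L_\ell=1$ into the bound \eqref{estErrCtrl}. Thus the proof really consists of two self-contained Lipschitz/boundedness verifications, followed by an invocation of the main theorem.

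For the Crammer--Singer loss $\ell(\mathbf{a},y)=\max_{j\neq y}(1-a_y+a_j)_+$, I would first note that $\ell(\mathbf{0},y)=1$, giving $B_Y=1$. For the Lipschitz property I would write $\ell(\mathbf{a},y)=\psi\paren{g_y(\mathbf{a})}$ where $\psi(t)=(t)_+$ and $g_y(\mathbf{a})=\max_{j\neq y}(1-a_y+a_j)$. Since $\psi$ and $\max$ are each $1$-Lipschitz w.r.t.\ the usual scalar/infinity norms, the Lipschitz constant reduces to controlling the affine maps $\mathbf{a}\mapsto 1-a_y+a_j$; the Clarke subgradient of $\ell$ at any $\mathbf{a}$ is a convex combination of vectors of the form $e_j-e_y$, and the corresponding bound on the $\ell_1$-norm of the subgradient yields the stated Lipschitz constant w.r.t.\ $\|\cdot\|_\infty$.

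For the multinomial logistic loss $\ell(\mathbf{a},y)=\log\paren{\sum_{j=1}^{c}\exp(a_j-a_y)}$, I would first compute $\ell(\mathbf{0},y)=\log c<\infty$, giving $B_Y=\log c$. For the Lipschitz property I would compute the gradient explicitly in terms of the softmax probabilities $p_j:=\exp(a_j-a_y)/\sum_k\exp(a_k-a_y)$, which by construction satisfy $p_j\geq 0$ and $\sum_j p_j=1$. The dual of $\|\cdot\|_\infty$ is $\|\cdot\|_1$, so the Lipschitz constant follows from bounding $\|\nabla_{\mathbf{a}}\ell\|_1$ using this probability-simplex structure.

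Once $L_\ell=1$ and the corresponding finite $B_Y$ are in hand, I would plug them into \eqref{estErrCtrl} of Theorem~\ref{thm:EstErrCtrl} and the corollary follows with no additional work. The only (minor) obstacle is careful bookkeeping for the Crammer--Singer case: a loose triangle inequality on $(a_j-b_j)-(a_y-b_y)$ overcounts the Lipschitz constant, so one must exploit that at any given point only one index attains the $\max$ and combine it with the contraction properties of $\max$ and $(\cdot)_+$.
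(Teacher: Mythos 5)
Your overall strategy --- verify Assumption \ref{A:loss} for each loss and then substitute $L_\ell$ and $B_Y$ into Theorem \ref{thm:EstErrCtrl} --- is exactly what the paper intends; the paper simply asserts the Lipschitz constants in the paragraph preceding the corollary and omits the verification, so the content of a proof is precisely the two checks you describe. Your $B_Y$ computations are correct ($B_Y=1$ for Crammer--Singer and $B_Y=\log c$ for the logistic loss). The gap is in the Lipschitz verification itself: carried out honestly, your subgradient argument yields $L_\ell=2$, not $L_\ell=1$, and the constant $2$ is tight for both losses. For the Crammer--Singer loss every Clarke subgradient is a convex combination of vectors $e_j-e_y$ with $j\neq y$ (standard basis vectors), and $\|e_j-e_y\|_1=2$; concretely, with $c=2$, $y=1$, $\mathbf{a}=(1,0)$, $\mathbf{b}=(0,1)$ one gets $\ell(\mathbf{a},1)=0$ and $\ell(\mathbf{b},1)=2$ while $\|\mathbf{a}-\mathbf{b}\|_\infty=1$. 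So your hope that exploiting ``only one index attains the max'' recovers the constant $1$ cannot succeed --- the factor $2$ is attained, not an artifact of a loose triangle inequality. For the multinomial logistic loss the gradient is $\partial\ell/\partial a_k=p_k$ for $k\neq y$ and $\partial\ell/\partial a_y=-(1-p_y)$, so $\|\nabla_{\mathbf{a}}\ell\|_1=2(1-p_y)\leq 2$, with $2$ approached as $p_y\to 0$; e.g.\ for $c=2$, $y=1$, $\ell((0,0),1)=\log 2$ versus $\ell((-1,1),1)=\log(1+e^2)$, a difference of about $1.43$ over an $\ell_\infty$ distance of $1$.

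The constant $1$ is correct only if Lipschitz continuity is measured with respect to the difference vector $(a_j-a_y)_{j}$ rather than $\mathbf{a}$ itself; the linear map $\mathbf{a}\mapsto(a_j-a_y)_j$ costs exactly the factor $2$ in $\ell_\infty$. Consequently the corollary as literally stated (and hence any proof of it, including yours) is off by a factor of $2$: the correct conclusion is that \eqref{estErrCtrl} holds with $L_\ell=2$. This does not affect the logarithmic dependence on $c$, which is the point of the corollary, but it does change the numerical constant in the bound, so you should either prove the $L_\ell=2$ version or explicitly flag the discrepancy rather than assert that the subgradient bound ``yields the stated Lipschitz constant.''
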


\begin{cor}
  Let $\ell$ be either the loss function used in Weston and Watkins MC-SVM~\citet{weston1998multi} $\ell(\mathbf{a},y)=\sum_{j=1}^{c}\big(1-a_y+a_j\big)_+$ or the loss function used in Lee et al. MC-SVM~\citet{lee2004multicategory} $\ell(\mathbf{a},y)=\sum_{j=1,j\neq y}^{c}(1+t_j)_+$.
  Let $\delta\in(0,1)$. If conditions \ref{A:kernelbound}, \ref{A:featureholder} hold, then with probability \(1 - \delta \) \eqref{estErrCtrl} holds with $L_\ell=c$.
\end{cor}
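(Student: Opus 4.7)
The plan is to mirror the (omitted) proof of Corollary~\ref{cor:csml}: verify that each of the two losses satisfies condition \ref{A:loss} with $L_\ell=c$ (up to a universal multiplicative constant), and then invoke Theorem~\ref{thm:EstErrCtrl} directly, since conditions \ref{A:kernelbound} and \ref{A:featureholder} are assumed in the hypothesis.

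For the Weston-Watkins loss $\ell(\mathbf{a},y)=\sum_{j=1}^{c}(1-a_y+a_j)_+$, I would fix $y$ and apply the triangle inequality together with the $1$-Lipschitzness of $t\mapsto(t)_+$ to obtain
\[
|\ell(\mathbf{a},y)-\ell(\mathbf{b},y)|\leq\sum_{j=1}^{c}\big|(a_j-a_y)-(b_j-b_y)\big|\leq 2c\|\mathbf{a}-\mathbf{b}\|_\infty,
\]
noting moreover that the $j=y$ summand is identically $1$, so the effective constant is $2(c-1)$. For the Lee et al.\ loss $\ell(\mathbf{a},y)=\sum_{j\neq y}(1+a_j)_+$ the same reasoning is simpler, since each summand is $\|\mathbf{a}-\mathbf{b}\|_\infty$-Lipschitz in $\mathbf{a}$, yielding total constant $c-1$. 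Both bounds are of order $c$, matching the stated $L_\ell=c$; any universal factor is absorbed into the constants of \eqref{estErrCtrl}. The boundedness part of \ref{A:loss} is verified by direct evaluation: $\ell(0,y)=c$ for Weston-Watkins and $\ell(0,y)=c-1$ for Lee et al., both finite, so $B_Y<\infty$.

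With \ref{A:loss} in force at $L_\ell=c$, the conclusion follows by substituting into the bound of Theorem~\ref{thm:EstErrCtrl}. I do not expect any genuine obstacle, since the content of the corollary is purely a bookkeeping check of Lipschitz constants with respect to the infinity norm. The only mild stylistic subtlety, already handled above, is that the naive bound for Weston-Watkins produces $2(c-1)$ rather than $c$; this is the same order of magnitude and does not affect the $c^{\frac{1}{2}-\frac{1}{\max\{2,p\}}}$ and $\log c$ dependencies that are the substantive content of Theorem~\ref{thm:EstErrCtrl}.
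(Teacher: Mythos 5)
Your proposal is correct and is exactly the argument the paper intends (the paper explicitly omits this proof): verify condition \ref{A:loss} for each loss by bounding the $\ell_1$-norm of the (sub)gradient, note $B_Y<\infty$, and substitute the resulting Lipschitz constant into Theorem~\ref{thm:EstErrCtrl}. Your side observation is also right and worth keeping: for the Weston--Watkins loss the tight Lipschitz constant w.r.t.\ $\|\cdot\|_\infty$ is $2(c-1)$ (take, e.g., $\mathbf{a}=(-t,t,\ldots,t)$ versus $\mathbf{b}=(t,-t,\ldots,-t)$ with $y=1$), which exceeds $c$ for $c\geq 3$, so the corollary's literal claim with $L_\ell=c$ is optimistic by a factor of at most $2$; this changes nothing about the $c$-scaling that is the point of the result.
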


\begin{remark}
  We can argue the tightness of the generalization bound in Theorem \ref{thm:EstErrCtrl} as  follows. Its dependency on the number of classes is optimal in the sense that it matches the corresponding ones in the standard learning setting with the test domain identical to the training domain. In particular, it enjoys a logarithmic dependency for Lipschitz continuous loss functions with the associated Lipschitz constant independent of $c$ (Corollary \ref{cor:csml}) and $p\leq 2$. Its dependency on other parameters (e.g., $N$ and $n$) matches the state-of-the-art results for DG in the binary classification setting~\citep{blanchard:11:nips}.
\end{remark}
\begin{remark}
Although our analysis holds for a general $p$-norm constraint expressing the correlation among components of decision function, the specific $p=2$-norm regularization is always used in practical implementations, e.g., Liblinear.
\end{remark}

\subsection{Sketch of Proof}
In this subsection, we sketch the main steps in proving Theorem \ref{thm:EstErrCtrl} and omit some details due to the space limit. The complete proof can be found in the appendix.
Our basic strategy is to decompose the generalization error into two terms: one due to the sampling of distributions from $\mu$ and one due to the sampling of training examples from the sampled distribution.
We need to introduce some useful lemmas. 
The following lemma quantifies the concentration behavior of the empirical average of random variables in Hilbert spaces from their expectation.
\begin{lemma}\textbf{(Hoeffding's Inequality in Hilbert spaces  \citep{steinwart2008support} )}
     \label{thm:Hoeffding_Hilbert}
     Let $(\Omega,\mathcal{A},P)$ be a probability space, $ H $ be a separable Hilbert space, and $B > 0$. Furthermore, let $ \eta_1, ... , \eta_n : \Omega \rightarrow H$ be independent $H$-valued random variables satisfying $ \|\eta_i \|_{\infty} \leq B $ $\forall i = 1 ,..., n$. Then, for all $\delta \in  (0,1) $ with probability at least $1-\delta$
     \[ \Big\| \frac{1}{n} \sum_{i=1}^n (\eta_i - \mathbb{E}_P \eta_i) \Big\|_H \leq B \sqrt{\frac{2 \log(1/\delta)}{n}} + B \sqrt{\frac{1}{n}} + \frac{4B \log(1/\delta)}{3n}. \]
\end{lemma}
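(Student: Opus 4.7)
The plan is to reduce the Hilbert-valued concentration to a scalar concentration of the real-valued random variable
$g(\eta_1,\ldots,\eta_n) := \big\|\tfrac{1}{n}\sum_{i=1}^{n}(\eta_i - \mathbb{E}_P\eta_i)\big\|_H$,
and then control it as (expectation) $+$ (deviation from expectation). The three summands on the right-hand side correspond naturally to three separate ingredients: $B/\sqrt{n}$ comes from $\mathbb{E}g$, $B\sqrt{2\log(1/\delta)/n}$ from the sub-Gaussian part of the tail, and $4B\log(1/\delta)/(3n)$ from the sub-exponential (variance-aware) part. So a plain McDiarmid argument is not enough; I need a Bernstein-type refinement.

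\textbf{Step 1 (expectation bound).} Using independence and centering, expand the squared norm:
\[
(\mathbb{E}g)^2 \le \mathbb{E}[g^2] \;=\; \tfrac{1}{n^2}\sum_{i=1}^{n}\mathbb{E}\|\eta_i - \mathbb{E}\eta_i\|_H^2 \;\le\; \tfrac{B^2}{n},
\]
where the first step is Jensen's inequality and the last uses $\|\eta_i\|_\infty \le B$. This yields $\mathbb{E}g \le B/\sqrt{n}$, producing the middle summand.

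\textbf{Step 2 (concentration around the mean).} View $g$ as a function of the $n$ independent arguments $\eta_1,\ldots,\eta_n$. Swapping $\eta_i$ with an independent copy $\eta_i'$ changes $g$ by at most $\|\eta_i-\eta_i'\|_H/n \le 2B/n$ by the reverse triangle inequality, so $g$ has bounded differences with $c_i \le 2B/n$. Moreover, the Efron--Stein variance proxy satisfies $V \le B^2/n$, which can be read off from Step 1 applied to i.i.d.\ copies. I would now invoke a variance-aware concentration inequality for bounded-differences functions (Bousquet/Boucheron--Lugosi--Massart, of the Bennett--Bernstein form)
\[
P\big(g - \mathbb{E}g \ge t\big) \le \exp\!\Big(-\tfrac{t^2}{2V + \tfrac{2}{3}\,ct}\Big), \quad c=\max_i c_i = 2B/n,
\]
and invert at level $\delta$ using the standard identity $\sqrt{a+b}\le\sqrt{a}+\sqrt{b}$ to get the two-term Bernstein tail
\[
g - \mathbb{E}g \;\le\; \sqrt{\tfrac{2V \log(1/\delta)}{1}} + \tfrac{2c\log(1/\delta)}{3} \;=\; B\sqrt{\tfrac{2\log(1/\delta)}{n}} + \tfrac{4B\log(1/\delta)}{3n}.
\]

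\textbf{Step 3 (combine).} Adding the bound from Step 1 to the deviation bound from Step 2 yields the stated inequality on the event where the Bernstein bound holds, i.e., with probability at least $1-\delta$.

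The main obstacle is the $\tfrac{4B\log(1/\delta)}{3n}$ term: a pure McDiarmid/bounded-differences argument recovers only the sub-Gaussian piece and the mean piece, so the proof must genuinely use a variance-aware concentration. An alternative, equally valid route is to apply a vector-valued Bernstein inequality (Pinelis--Sakhanenko/Yurinskii) directly to $\sum_i(\eta_i-\mathbb{E}\eta_i)$ with variance proxy $\sum_i\mathbb{E}\|\eta_i\|_H^2 \le nB^2$; this sidesteps Step 2 but still needs Step 1 as a separate ingredient because the Hilbert-space Bernstein tail is centered at $0$, not at $\mathbb{E}g$, and so the mean term $B/\sqrt{n}$ must be absorbed separately. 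Either way, validating the variance bound $V\le B^2/n$ and keeping track of the multiplicative constants $2$ and $4/3$ is the fiddly part.
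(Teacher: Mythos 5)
This lemma is not proved in the paper at all --- it is imported verbatim from Steinwart and Christmann (2008), where it is derived from a Bernstein-type inequality for Hilbert-space-valued sums with the variance proxy set to $\sigma^2=B^2$; that is where the $\frac{4B\log(1/\delta)}{3n}$ term originates. So there is no in-paper argument to match yours against, and your overall architecture (bound $\mathbb{E}\,g$ by $B/\sqrt{n}$, then control $g-\mathbb{E}\,g$) is the standard one. Your Step 1 is correct: independence kills the cross terms, $\mathbb{E}\|\eta_i-\mathbb{E}\eta_i\|_H^2\le B^2$, and Jensen gives $\mathbb{E}\,g\le B/\sqrt{n}$.

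The gap is in Step 2. The inequality you invoke --- a Bennett--Bernstein tail $\exp\big(-t^2/(2V+\tfrac{2}{3}ct)\big)$ for an \emph{arbitrary} bounded-differences function with $V$ the Efron--Stein variance proxy --- is not an off-the-shelf theorem. The exponential Efron--Stein inequalities of Boucheron--Lugosi--Massart require additional structure (e.g.\ self-bounding) or come with different constants, and Bousquet's inequality, which does apply here because $g=\sup_{\|v\|_H\le1}\frac{1}{n}\sum_i\langle v,\eta_i-\mathbb{E}\eta_i\rangle$ is the supremum of an empirical process, carries the variance term $\sigma^2+2b\,\mathbb{E}Z$ rather than your $V$, which degrades the leading constant from $\sqrt{2}$ and so does not reproduce the stated bound as written. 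Ironically, the ``plain McDiarmid argument'' you dismiss at the outset already suffices and is the cleanest fix: with $c_i=2B/n$ one has $\sum_i c_i^2=4B^2/n$, so McDiarmid gives $\mathbb{P}(g-\mathbb{E}\,g\ge t)\le\exp(-t^2n/(2B^2))$, i.e.\ $g\le \mathbb{E}\,g+B\sqrt{2\log(1/\delta)/n}$ with probability $1-\delta$; combined with Step 1 this proves the lemma with the third summand as pure slack. (The third summand is only needed, and only pays off, in the sharper Bernstein form where $\mathbb{E}\|\eta_i\|_H^2\le\sigma^2\ll B^2$, which is the setting Steinwart and Christmann actually treat.) Your alternative route via a vector-valued Bernstein/Yurinskii inequality is essentially the textbook's route and is also valid, provided you quote that inequality precisely rather than the bounded-differences surrogate.
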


Our analysis require to control Rademacher complexities of some function classes composited by a Lipschitz operator over vector-valued function classes. To this aim, we need the following lemma showing how the Rademacher complexity of this composite function class can be bounded in terms of the dimension of the output. Lemma \ref{lem:RC-MC} follows from Theorem 5 and Proposition 7 in~\citet{lei2019data}.
\begin{defn}[Rademacher complexity]\label{def:rademacher}
  Let $\widetilde{\hcal}$ be a class of real-valued functions defined over a space $\widetilde{\zcal}$ and $S'=\{\tilde{z}_i\}_{i=1}^n\in\widetilde{\zcal}^n$.
  The \emph{empirical Rademacher complexity} of $\widetilde{\hcal}$ w.r.t. $S'$ is defined as
  $
    \mathfrak{R}_{S'}(\widetilde{\hcal})=\mathbb{E}_{\epsilon_i}\big[\sup_{h\in \widetilde{\hcal}}\frac{1}{n}\sum_{i=1}^n\epsilon_ih(\tilde{z}_i)\big],
  $
  where $\epsilon_1,\ldots,\epsilon_n$ are independent Rademacher variables, i.e., they take values in $\{+1,-1\}$ with equal probabilities.
\end{defn}
\begin{lemma}[\citep{lei2019data}]\label{lem:RC-MC}
  Let $\widetilde{\zcal}=\widetilde{\xcal}\times\widetilde{\ycal}$ be a input-output space pair and $\widetilde{S}=\{\tilde{z}_1,\ldots,\tilde{z}_m\}\subset\widetilde{Z}^m$. Let $\widetilde{\hcal}$ be a RKHS defined on $\widetilde{\xcal}$ with $\tilde{k}$ be the associated kernel. Let $\widetilde{\fcal}_{p,R}=\{(f_1,\ldots,f_c):f_j\in\widetilde{\hcal},\big(\sum_{j=1}^{c}\|f_j\|_{\tilde{k}}^p\big)^{\frac{1}{p}}\leq R\}$ and $\tilde{\ell}:\rbb^c\times\widetilde{\ycal}\mapsto\rbb^+$ be a Lipschitz function satisfying
  $
  |\tilde{\ell}(\mathbf{a},y)-\tilde{\ell}(\tilde{\mathbf{a}},y)|\leq L\|\mathbf{a}-\tilde{\mathbf{a}}\|_\infty.
  $
  Then, there holds
  \begin{multline*}
  \mathfrak{R}_{\widetilde{S}}\Big(\Big\{z\mapsto\tilde{\ell}\big(\big(f_1(x),\ldots,f_c(x)\big),y\big):(f_1,\ldots,f_c)\in\widetilde{\fcal}_{p,R}\Big\}\Big)\\
  \leq 16L\sqrt{\log2}R\sup_{x\in\widetilde{\xcal}}\sqrt{\tilde{k}(x,x)} m^{-\frac{1}{2}}c^{\frac{1}{2}-\frac{1}{\max\{2,p\}}}\big(1+\log^{\frac{3}{2}}\sqrt{2}mc\big).
  \end{multline*}
\end{lemma}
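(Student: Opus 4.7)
The plan is to prove this Rademacher complexity bound by combining three ingredients: a norm-conversion step that reduces the $\ell_p$ constraint to an $\ell_2$ constraint, a Dudley-integral chaining argument in the empirical $\ell_\infty$ covering metric, and a structural covering-number estimate for vector-valued RKHS balls in that metric. The use of the $\ell_\infty$ (rather than $\ell_2$) geometry is what avoids the $\sqrt{c}$ cost incurred by the vector-contraction inequality and instead yields only the logarithmic dependence on $c$ visible in the statement.

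First, by the standard norm inequality on $\mathbb{R}^c$ one has $\|\bmf\|_{2,2}\le c^{1/2-1/\max\{2,p\}}\|\bmf\|_{2,p}$ for every $p\ge 1$, so $\widetilde{\fcal}_{p,R}\subset \widetilde{\fcal}_{2,R'}$ with $R':=Rc^{1/2-1/\max\{2,p\}}$. It therefore suffices to prove the bound for $p=2$ and radius $R'$; the factor $c^{1/2-1/\max\{2,p\}}$ then reappears in the final expression. Next, because $\tilde{\ell}$ is $L$-Lipschitz w.r.t.\ $\|\cdot\|_\infty$, for any predictors $\bmf,\mathbf{g}\in\widetilde{\fcal}_{2,R'}$ and any data point $(x_i,y_i)$,
\[
|\tilde{\ell}(\bmf(x_i),y_i)-\tilde{\ell}(\mathbf{g}(x_i),y_i)|\le L\max_{1\le j\le c}|f_j(x_i)-g_j(x_i)|.
\]
Hence the empirical $L_2(P_m)$-covering number of the loss-composite class at scale $u$ is controlled by the $\ell_\infty$-covering number of the joint evaluation set $\{(f_j(x_i))_{1\le i\le m,\,1\le j\le c}:\bmf\in\widetilde{\fcal}_{2,R'}\}\subset\mathbb{R}^{mc}$ at scale $u/L$.

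The main technical step is to bound this joint $\ell_\infty$-covering number in terms of $R'$, $B:=\sup_{x\in\widetilde{\xcal}}\sqrt{\tilde{k}(x,x)}$, $m$, and $c$. For a single RKHS ball of radius $\rho$, a Maurey-sparsification style argument gives $\log \mathcal{N}_\infty(\epsilon)\le C\rho^2B^2\epsilon^{-2}\log m$. The structural result of Lei et al.\ (2019) extends this to the $\ell_2$-constrained vector-valued class by identifying $\bmf=(f_1,\ldots,f_c)$ with a single function on the product index set $\widetilde{\xcal}\times\{1,\ldots,c\}$ under the direct-sum kernel, so that $\|\bmf\|_{2,2}$ coincides with the RKHS norm of the lifted function. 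The resulting covering number satisfies
\[
\log \mathcal{N}_\infty\bigl(\epsilon,\{(f_j(x_i))_{i,j}:\bmf\in\widetilde{\fcal}_{2,R'}\}\bigr)\le C R'^{2}B^{2}\epsilon^{-2}\log(\sqrt{2}cm),
\]
incurring only a single $\log c$ factor, which is exactly what drives the $(1+\log^{3/2}\sqrt{2}mc)$ term in the stated bound.

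Finally, I would insert the covering estimate into Dudley's entropy integral,
\[
\mathfrak{R}_{\widetilde{S}}(\tilde{\ell}\circ\widetilde{\fcal}_{p,R})\le\frac{12}{\sqrt{m}}\int_0^{LR'B}\sqrt{\log \mathcal{N}_\infty(u/L,\cdot)}\,du,
\]
and integrate on a suitably truncated interval, optimizing the lower cutoff to balance the two regimes of the entropy bound. Tracking the numerical constants carefully through the square root yields the claimed $16L\sqrt{\log 2}\,R' B m^{-1/2}(1+\log^{3/2}\sqrt{2}mc)$; substituting $R'=Rc^{1/2-1/\max\{2,p\}}$ returns the statement exactly. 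The main obstacle is the structural covering-number bound in the vector-valued setting: a naive coordinate-wise union bound over the $c$ components forces an extra multiplicative $\sqrt{c}$ outside the logarithm, so avoiding it requires the lifted direct-sum kernel construction together with the Maurey-type argument rather than any trivial union over coordinates.
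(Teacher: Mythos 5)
The paper offers no proof of this lemma at all --- it is imported verbatim from Lei et al.\ (2019), stated as following from their Theorem 5 and Proposition 7 --- and your sketch correctly reconstructs exactly the mechanism of that reference: the $\ell_p\!\to\!\ell_2$ norm conversion contributing the $c^{1/2-1/\max\{2,p\}}$ factor, the reduction via $\ell_\infty$-Lipschitzness to an $\ell_\infty$-norm covering number of the joint evaluation set, the lifted direct-sum-kernel/Maurey sparsification bound with a single $\log(mc)$ factor, and a truncated Dudley entropy integral producing the $\log^{3/2}$ term. Your approach is therefore correct and matches the cited source's proof; the only part left implicit is the bookkeeping of the numerical constant $16\sqrt{\log 2}$, which you rightly flag as a matter of careful tracking rather than a conceptual gap.
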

 \begin{proof}[Proof of Theorem \ref{thm:EstErrCtrl}]

 The function $\bmf$ can be split into c components $\bmf=(f_1,\ldots,f_c)$. It follows from the triangle inequality that
 \begin{align}
 \sup_{\bmf\in\fcal_{p,R}}| \widehat{\varepsilon}(\bmf) - {\varepsilon}(\bmf)|
    &\leq \sup_{\bmf\in\fcal_{p,R}}  \Big|\widehat{\varepsilon}(\bmf)  - \frac{1}{N}\sum_{i=1}^N \frac{1}{n_i}\sum_{j=1}^{n_i} \ell (\bmf({P}_{X}^i,{X}_{ij}), Y_{ij}) \Big| \notag\\
    & +  \sup_{\bmf\in\fcal_{p,R}} \Big|\frac{1}{N}\sum_{i=1}^N \frac{1}{n_i}\sum_{j=1}^{n_i} \ell (\bmf({P}_{X}^i,{X}_{ij}), Y_{ij}) - {\varepsilon}(\bmf)\Big| =: (I) + (II).\label{error-decomposition-main}
 \end{align}

We now control these two terms separately.

 \paragraph{Control of Term (I).} For the first term, one can use the Lipschitz continuity of $\ell$ to derive that
 \begin{align}
 \Big|\widehat{\varepsilon}(\bmf) - \frac{1}{N}\sum_{i=1}^N \frac{1}{n_i}\sum_{j=1}^{n_i} \ell (\bmf({P}_{X}^i,{X}_{ij}), Y_{ij})\Big|
    & \leq L_\ell\max_{l=1,\ldots,c}\max_{i=1,\ldots,N}\max_{j=1,\ldots,n_i}\big|f_l(\hat{P}_X^i,X_{ij})-f_l(P_X^i,X_{ij})\big|. \label{eq:first_term_main}
 \end{align}

 Then we use the reproducing property of the kernel, the definition and the H\"older continuity of the kernel $\kappa$, and  Hoeffding's inequality in the Hilbert space \(H_{k_x^\prime}\) (Lemma \ref{thm:Hoeffding_Hilbert}), to show that with probability $ 1 - \delta $
 \begin{equation}
 \label{eq:first_term_bound_main}
 \sup_{\bmf\in\fcal_{p,R}} \Big|\widehat{\varepsilon}(\bmf) - \frac{1}{N}\sum_{i=1}^N \frac{1}{n_i}\sum_{j=1}^{n_i} \ell (\bmf({P}_{X}^i,{X}_{ij}), Y_{ij}) \Big| \leq   L_{\ell} L_{\kappa} R B_{k}(B_{k^\prime})^{\alpha} \Big( \sqrt{\frac{2 \log \frac{N}{\delta}}{n}} + \sqrt{\frac{1}{n}} + \frac{4 \log \frac{N}{\delta}}{3n} \Big)^{\alpha},
 \end{equation}
 where we have used $\|f_l\|_{\bar{k}}\leq R$ for $\bmf\in\fcal_{p,R}$.

  \paragraph{Control of Term (II).} We now turn to the term (II) in Eqn. \ref{error-decomposition-main}. To this aim, we first consider the term \((II)^\prime\) as the one-sided version of term \((II)\) (i.e., without the absolute value)
    \begin{align}
        (II)^\prime & \leq
        \begin{multlined}[t]
                    \sup_{\bmf\in\fcal_{p,R}} \frac{1}{N}\sum_{i=1}^N \Big(\frac{1}{n_i}\sum_{j=1}^{n_i} \ell (\bmf({P}_{X}^i,{X}_{ij}), Y_{ij})
                        - \bE \bbrac{ \ell( \bmf( \widetilde{X}),Y ) \big| P_{XY}^i }\Big) \notag\\
                    + \sup_{\bmf\in\fcal_{p,R}} \frac{1}{N} \sum_{i=1}^N  \Big( \bE \bbrac{ \ell( \bmf( \widetilde{X}),Y ) \big| P_{XY}^i } - \bE \bbrac{ \ell( \bmf( \widetilde{X}),Y ) } \Big)=: (IIa) + (IIb).\label{II-decomposition-main}
                \end{multlined}
\end{align}

 \paragraph{Control of Term (IIa).} Conditional to \(P_{XY}^1\),...,\(P_{XY}^N\) , the random variables \((X_{ij}, Y_{ij})_{ij}\) are independent (not identically distributed). Introduce the random variable
\begin{equation*}
    \zeta ((X_{ij}, Y_{ij})_{ij}) = \sup_{\bmf\in\fcal_{p,R}} \frac{1}{N}\sum_{i=1}^N \Big(\frac{1}{n_i}\sum_{j=1}^{n_i} \ell (\bmf({P}_{X}^i,{X}_{ij}), Y_{ij}) - \bE \bbrac{ \ell( \bmf( \widetilde{X}),Y ) \big| P_{XY}^i }\Big).
\end{equation*}
It can be checked by the Lipschitz continuity of $\ell$ that $\sup_{\bmf\in\fcal_{p,R}}\ell(\bmf(P,X),Y)\leq B_\ell$, where $B_{\ell}=B_Y+L_\ell  B_{k}B_{\kappa}R$.
Using Rademacher complexity analysis and
applying Lemma \ref{lem:RC-MC}  with $m=Nn$ we obtain
\[
\bE [\zeta | (P^i_{XY})_{1 \leq i \leq N} ] \leq 27L_\ell RB_\kappa B_k \big(Nn\big)^{-\frac{1}{2}}c^{\frac{1}{2}-\frac{1}{\max\{2,p\}}}\big(1+\log^{\frac{3}{2}}\sqrt{2}Nnc\big),
\]
\paragraph{Control of term (IIb).}
Introduce the random variable
\begin{equation*}
    \xi((P_{XY}^i)_{1 \leq i \leq N}) = \sup_{\bmf\in\fcal_{p,R}} \frac{1}{N} \sum_{i=1}^N  \Big( \bE \bbrac{ \ell( \bmf( \widetilde{X}),Y ) \big| P_{XY}^i } - \bE \bbrac{ \ell( \bmf( \widetilde{X}),Y ) } \Big).
\end{equation*}

By symmetrization trick in relating the deviation between empirical means from expectations to Rademacher complexities, and an application of Lemma \ref{lem:RC-MC} shows that
\[
\bE[\xi]\leq 27L_\ell RB_{\kappa}B_k N^{-\frac{1}{2}}c^{\frac{1}{2}-\frac{1}{\max\{2,p\}}}\big(1+\log^{\frac{3}{2}}\sqrt{2}Nc\big).
\]
Combining terms (I), (IIa) and (IIb),  we derive the final bound.
\end{proof}

\section{Results}
We test the proposed algorithm on 4 multiclass datasets and compare it with pooling, where data from all the tasks are pooled together to learn one single classifier. Datasets description are given below and a summary is in Table \ref{tb:datasets}.

\begin{table}[htb!]
\centering
\begin{tabular}{ | c | c | c | c |c |}
\hline
Dataset & Training Tasks & Test Tasks & Examples Per Task & Classes \\ \hline
Synthetic & 80 & 20 & 100 & 10   \\ \hline
Satellite & 400 & 100 & 77-165 & 3  \\ \hline
HAR & 20 & 10 & 300 & 6   \\ \hline
MNIST-MOD & 80 & 20 & 100 & 10   \\ \hline
\end{tabular}
\caption{Summary of Datasets}
\label{tb:datasets}
\end{table}

\textbf{Synthetic Dataset:} Features for synthetic data are drawn from the unit square. Based on one of the dimensions, the data are labeled from 0 to 10, e.g., if the feature value is between 0 and 0.1, then it is labeled as 1, if it is in between 0.1 and 0.2, then it is labeled as 2, and so on. After that, the feature vectors are rotated clockwise by an angle randomly drawn from $0$ to $ 180 $ degrees to get data for one task. The process is repeated 100 times to get data for 100 tasks out of which 80 are train tasks and 20 are test tasks. Fig. \ref{fig:synth1} shows 3 such tasks for $ \theta = 0, 90 $ and $180$ where the supports do not overlap at all, and Fig. \ref{fig:synth2} shows 13 tasks where the supports overlap.

\begin{figure}[htb!]
\begin{minipage}{0.45\linewidth}
  \centering
    \includegraphics[width=0.9\textwidth]{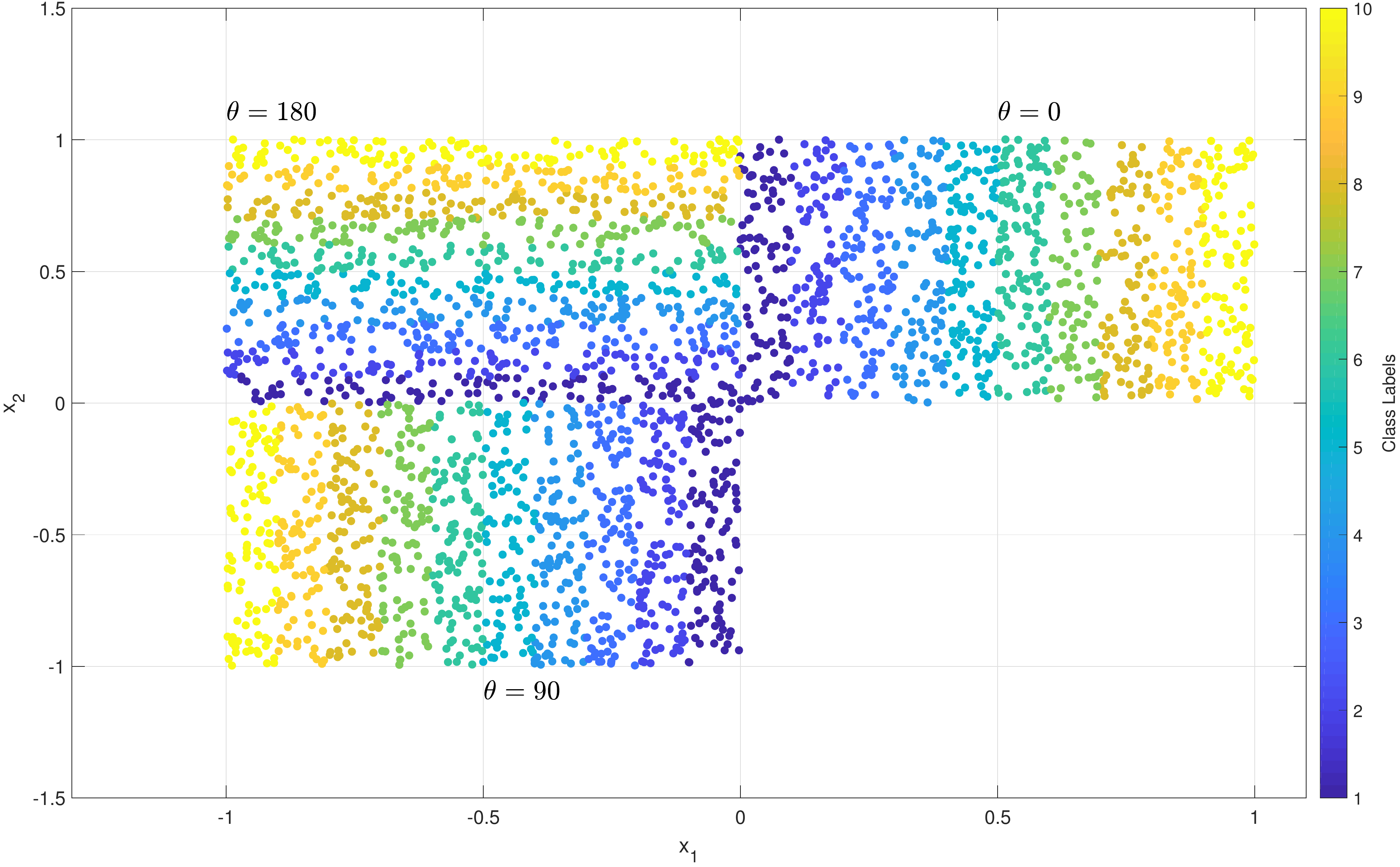}
     \caption{Synthetic Dataset: Three tasks $ \theta = \{0,90,180 \}$ }
     \label{fig:synth1}
\end{minipage}
\hspace{0.5cm}
 \begin{minipage}{0.45\linewidth}
  \centering
    \includegraphics[width=0.9\textwidth]{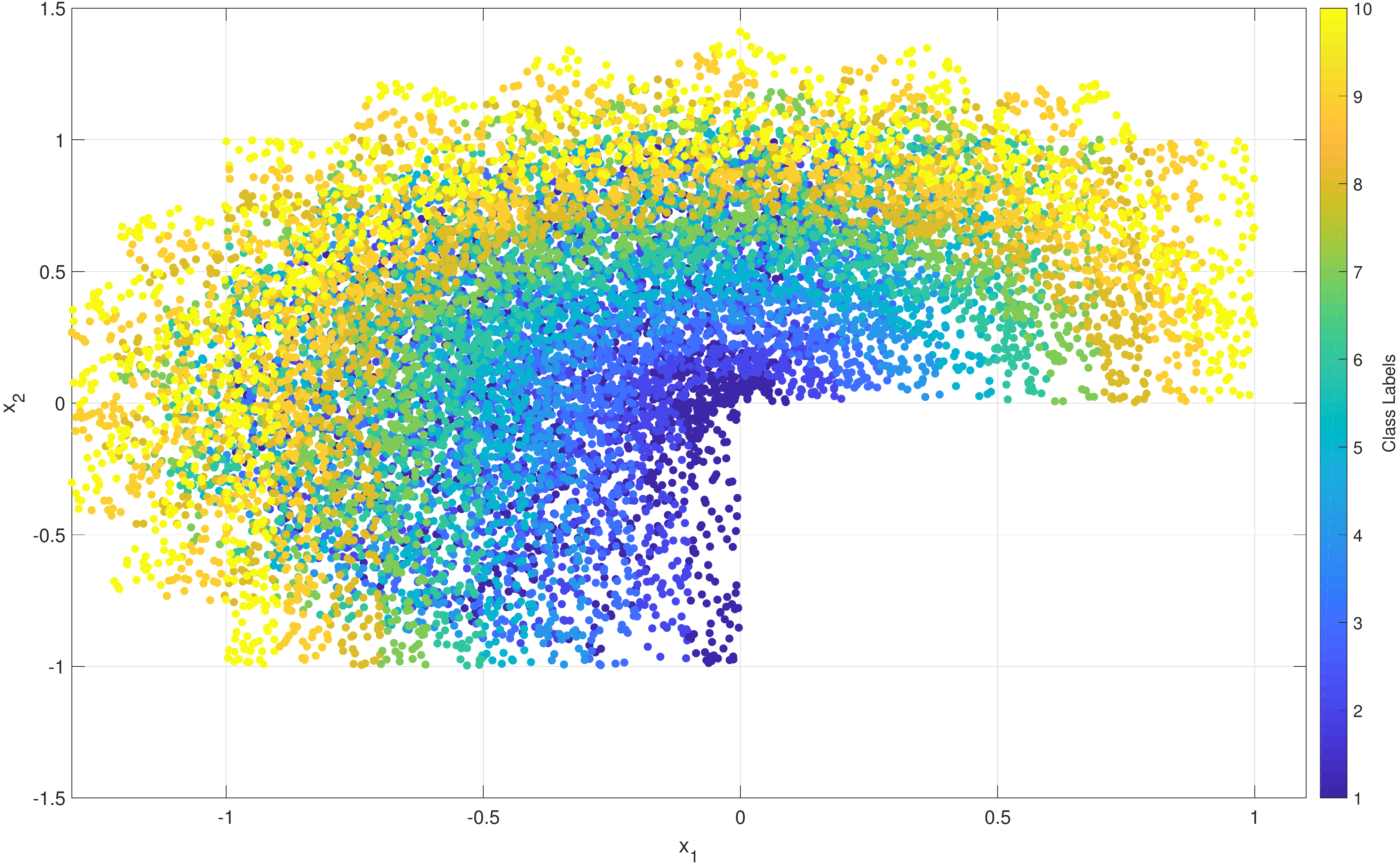}
     \caption{Synthetic Dataset: Thirteen tasks $ \theta = \{0,15,30,...,180 \}$ }
     \label{fig:synth2}
\end{minipage}
\end{figure}

\textbf{Satellite Dataset:} The problem is described in the introduction, and we used the dataset presented by \citet{sharma2015robust} modified for $c=3$ spacecraft.

\textbf{HAR Dataset:} This is a human activity recognition using smart-phone dataset from UCI repository \citep{anguita2013public}. Each of 30 volunteers performed six activities (walking, walking upstairs, walking downstairs, sitting, standing, laying) wearing the smart-phone.

\textbf{MNIST-MOD Dataset:} We randomly draw 1000 images from MNIST's train dataset. Then we rotate each of this image by randomly drawn angle from 0 to  180 degrees and repeat this 100 times to get data for 100 tasks. Examples for rotated MNIST dataset are shown in Fig. \ref{fig:MNISTROT}.

\begin{figure}[htb!]
  \centering
    \includegraphics[width=0.7\textwidth]{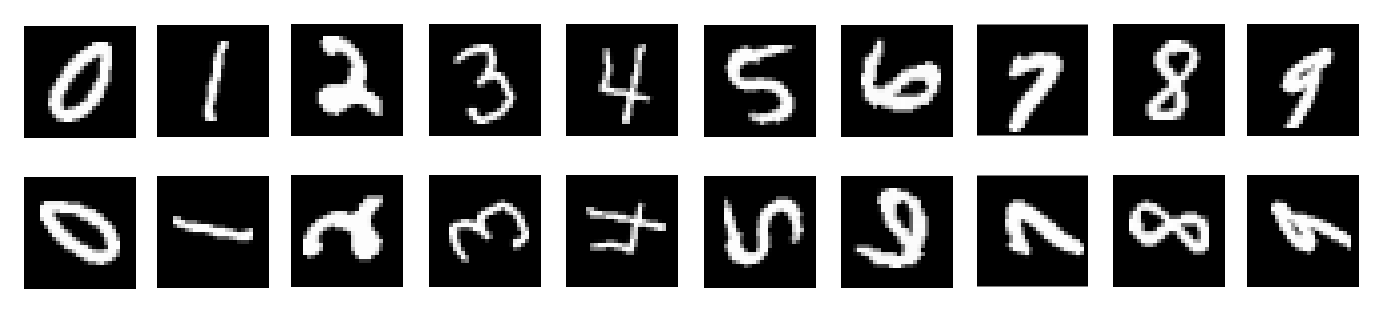}
     \caption{MNIST Data with no rotation (first row) and 90 degree rotation (second row)}
     \label{fig:MNISTROT}
\end{figure}

We use all Gaussian kernels and a novel random Fourier Feature (RFF) approximation, which extends the usual RFF approximation on Euclidean space $\sX $, \citet{rahimi2008random}, to the extended feature space $ \sP_{\sX} \times  \sX  $, to speed up the algorithm. We used Liblinear package for the implementation \citep{fan2008liblinear}.  All hyperparameters were selected using five fold cross-validation and experiments were repeated 10 times. We show results in Table \ref{tb:results} 
\footnote{The code to reproduce our results is available at \\ https://www.dropbox.com/sh/bls758ro5762mtf/AACbn3UXJItY9uwtmCAdi7E3a?dl=0}.

 The proposed method performs the best in three datasets and equally well in the one remaining dataset. The more our method outperforms pooling, the more knowledge can be shared between tasks. In MNIST data, our method perform marginally better than pooling. Note that we only use 1000 images for training and the accuracies can be improved further if one uses the entire MNIST training set. In case of synthetic and satellite dataset, the proposed method is significantly better. This is because similarity between tasks or domains is representative of similarity between predictors. In case of HAR dataset, proposed method doesn't improve pooling method and reason could be that all tasks are very similar to each other and pooling the data may be the best thing to do.

\begin{table}[htb!]
\centering
\begin{tabular}{ | c | c | c | }
\hline
Dataset & Pooling & Proposed Method   \\ \hline
Synthetic & 70.73 ( $ \pm 2.30 $)&  \textbf{25.40} ( $ \pm 1.72 $)  \\ \hline
Satellite &11.95  ( $ \pm 0.46$) & \textbf{ 8.28} ( $ \pm 0.79 $) \\ \hline
HAR & 1.69 ( $ \pm 0.56 $)  &  \textbf{1.68} ( $ \pm 0.58 $)\\ \hline
MNIST-MOD & 22.79 ( $ \pm 1.38 $)   & \textbf{21.39} ( $ \pm 1.24 $) \\ \hline
\end{tabular}
\caption{Percentage Error}
\label{tb:results}
\end{table}

\section{Conclusion and Future Work}
In this work, we extended the kernel-based algorithm for domain generalization of \citet{blanchard:11:nips} to the multiclass setting and proved the tightest known generalization error bound in terms of number of classes. We implemented the approach, demonstrating its improved performance w.r.t. a pooling strategy. Future work will focus on improved generalization bounds in terms of number of domains and extensions to zero shot learning. In extensions, we are interested in zero shot learning where training tasks have $c$ classes and test tasks have $c+1$ classes.

\newpage

\appendix

\numberwithin{equation}{section}
\numberwithin{theorem}{section}
\numberwithin{figure}{section}
\numberwithin{table}{section}
\renewcommand{\thesection}{{\Alph{section}}}
\renewcommand{\thesubsection}{\Alph{section}.\arabic{subsection}}
\renewcommand{\thesubsubsection}{\Roman{section}.\arabic{subsection}.\arabic{subsubsection}}
\setcounter{secnumdepth}{-1}
\setcounter{secnumdepth}{3}

\section{Proof of Theorem \ref{thm:EstErrCtrl}}
 Without loss of generality, it is assumed that \(n_i = n \). The function $\bmf$ can be split into c components $\bmf=(f_1,\ldots,f_c)$. We are interested in error bounds over $\bmf\in\fcal_{p,R}$
 \begin{align}
 \sup_{\bmf\in\fcal_{p,R}}| \widehat{\varepsilon}(\bmf) - {\varepsilon}(\bmf)|
    &\leq \sup_{\bmf\in\fcal_{p,R}}  \Big|\widehat{\varepsilon}(\bmf)  - \frac{1}{N}\sum_{i=1}^N \frac{1}{n_i}\sum_{j=1}^{n_i} \ell (\bmf({P}_{X}^i,{X}_{ij}), Y_{ij}) \Big| \notag\\
    & +  \sup_{\bmf\in\fcal_{p,R}} \Big|\frac{1}{N}\sum_{i=1}^N \frac{1}{n_i}\sum_{j=1}^{n_i} \ell (\bmf({P}_{X}^i,{X}_{ij}), Y_{ij}) - {\varepsilon}(\bmf)\Big|\notag\\
    & =: (I) + (II).\label{error-decomposition}
 \end{align}

 \subsection{Control of term I}
 For the first term, it follows from the Lipschitz continuity of $\ell$ that
 \begin{align}
 \Big|\widehat{\varepsilon}(\bmf) - \frac{1}{N}\sum_{i=1}^N \frac{1}{n_i}\sum_{j=1}^{n_i} \ell (\bmf({P}_{X}^i,{X}_{ij}), Y_{ij}) \Big|
    & = \Big|\frac{1}{N}\sum_{i=1}^{N}\frac{1}{n_i}\sum_{j=1}^{n_i}\Big(\ell\big(f(\hat{P}_X^i,X_{ij}),Y_{ij}\big)-\ell\big(f(P_X^i,X_{ij}),Y_{ij}\big)\Big)\Big|\notag\\
    & \leq \frac{1}{N}\sum_{i=1}^{N}\frac{L_\ell}{n_i}\sum_{j=1}^{n_i}\big\|f(\hat{P}_X^i,X_{ij})-f(P_X^i,X_{ij})\big\|_\infty\notag\\
    & = \frac{L_\ell}{N}\sum_{i=1}^{N}\frac{1}{n_i}\sum_{j=1}^{n_i}\max_{l=1,\ldots,c}\big|f_l(\hat{P}_X^i,X_{ij})-f_l(P_X^i,X_{ij})\big|\notag\\
    & \leq L_\ell\max_{l=1,\ldots,c}\max_{i=1,\ldots,N}\max_{j=1,\ldots,n_i}\big|f_l(\hat{P}_X^i,X_{ij})-f_l(P_X^i,X_{ij})\big|. \label{eq:first_term}
 \end{align}
 Now, let us look at the term \( | f_l(\widehat{P}_{X}^i,{X}_{ij}) -  f_l({P}_{X}^i,{X}_{ij}) |\) for some \(l \in \set{1,2,...,c}\).
 Using the reproducing property of the kernel, we derive
 \[
 | f_l(\widehat{P}_{X},X) -  f_l({P}_{X},X) | = |\langle \bar{k}\big( (\widehat{P}_{X},X), \cdot \big) - \bar{k}\big( ({P}_{X},X), \cdot \big), f_l \rangle |
    \leq  \| f_l \|_{\bar{k}}  \|  \bar{k}\big( (\widehat{P}_{X},X), \cdot \big) - \bar{k}\big( ({P}_{X},X) , \cdot \big) \|_{\bar{k}}.
 \]
 Furthermore, according to the H\"older continuity of the kernel $\kappa$, we know
 \begin{align*}
   \|\bar{k}\big( (\widehat{P}_{X},X), \cdot \big) - \bar{k}\big( ({P}_{X},X) , \cdot \big) \|_{\bar{k}} & = \Big( \bar{k}\big( (\widehat{P}_{X},X), (\widehat{P}_{X},X) \big)
            + \bar{k}\big( ({P}_{X},X), ({P}_{X},X) \big)
            -2  \bar{k}\big( (\widehat{P}_{X},X), ({P}_{X},X) \big) \Big)^{\frac{1}{2}}\\
    & = \sqrt{k(X,X)}\Big( \kappa\big( \Phi(\widehat{P}_{X}), \Phi(\widehat{P}_{X}) \big)
            + \kappa \big( \Phi({P}_{X}), \Phi({P}_{X}) \big)
            -2  \kappa\big( \Phi(\widehat{P}_{X}), \Phi({P}_{X}) \big) \Big)^{\frac{1}{2}}\\
    & \leq B_k \| \phi_{\kappa} (\Phi(P_{X})) -  \phi_{\kappa} (\Phi(\widehat{P}_{X})) \|_{\kappa} \leq B_kL_{\kappa} \| \Phi(P_{X}) -\Phi(\widehat{P}_{X}) \|_{k_{x'}}^{\alpha}.
 \end{align*}
 Therefore, there holds
 \[
 | f_l(\widehat{P}_{X},X) -  f_l({P}_{X},X) |\leq \| f_l \|_{\bar{k}} B_{k}L_{\kappa} \| \Phi(P_{X}) -\Phi(\widehat{P}_{X}) \|_{k_{x'}}^{\alpha}.
 \]
 We can bound $ \| \Phi(P_{X}) -\Phi(\widehat{P}_{X}) \|_{k_x'} $ using Hoeffding's inequality in the Hilbert space \(H_{k_x^\prime}\). Indeed, by Lemma \ref{thm:Hoeffding_Hilbert} with probability at least $1-\delta$ we have
 \begin{align*}
  \| \Phi(P_{X}) -\Phi(\widehat{P}_{X}) \|_{k'_x} & = \Big\|\frac{1}{n}\sum_{j=1}^{n}k'_x(X_j,\cdot)-\bE[k'_x(X,\cdot)]\Big\|_{k'_x}\\
  &\leq B_{k^\prime} \sqrt{\frac{2 \log(1/\delta)}{n}} + B_{k^\prime} \sqrt{\frac{1}{n}} + \frac{4 B_{k^\prime} \log(1/\delta)}{3n}.
  \end{align*}
 Combining the above two inequalities together, we derive the following inequality with at least probability $ 1 - \delta $ for all $j=1,\ldots,n_i$ and $l=1,\ldots,c$
 \begin{equation*}
     \label{eq:}
     \Big| f_l(\widehat{P}_{X}^i,{X}_{ij}) -  f_l({P}_{X}^i,{X}_{ij}) \Big| \leq \| f_l \|_{\bar{k}} B_{k} L_{\kappa} \Big( B_{k^\prime} \sqrt{\frac{2 \log(1/\delta)}{n}} + B_{k^\prime}  \sqrt{\frac{1}{n}} + \frac{4 B_{k^\prime}  \log(1/\delta)}{3n} \Big)^{\alpha}.
 \end{equation*}
 An union bound then implies the following inequality with at least probability $ 1 - \delta $ uniformly for all $i=1,\ldots,N,j=1,\ldots,n_i$ and $l=1,\ldots,c$
 \begin{equation}
     \label{eq:union_bound_f}
      \Big| f_l(\widehat{P}_{X}^i,{X}_{ij}) -  f_l({P}_{X}^i,{X}_{ij}) \Big| \leq \| f_l \|_{\bar{k}} B_{k} L_{\kappa}  \Big( B_{k^\prime} \sqrt{\frac{2 \log \frac{N}{\delta}}{n}} + B_{k^\prime} \sqrt{\frac{1}{n}} + \frac{4 B_{k^\prime} \log \frac{N}{\delta}}{3n} \Big)^{\alpha}.
 \end{equation}
Combining equation \ref{eq:first_term} and \ref{eq:union_bound_f} together, we derive the following inequality with at least probability $ 1 - \delta $
 \begin{equation}
 \label{eq:first_term_bound}
 \sup_{\bmf\in\fcal_{p,R}} \Big|\widehat{\varepsilon}(\bmf) - \frac{1}{N}\sum_{i=1}^N \frac{1}{n_i}\sum_{j=1}^{n_i} \ell (\bmf({P}_{X}^i,{X}_{ij}), Y_{ij}) \Big| \leq   L_{\ell} L_{\kappa} R B_{k}(B_{k^\prime})^{\alpha} \Big( \sqrt{\frac{2 \log \frac{N}{\delta}}{n}} + \sqrt{\frac{1}{n}} + \frac{4 \log \frac{N}{\delta}}{3n} \Big)^{\alpha},
 \end{equation}
 where we have used $\|f_l\|_{\bar{k}}\leq R$ for $\bmf\in\fcal_{p,R}$.

\subsection{Control of term II}
We now turn to the term II. To this aim, we first consider the term \((II)^\prime\) as the one sided version of term \((II)\) i.e.,
\begin{equation*}
    (II)^\prime := \sup_{\bmf\in\fcal_{p,R}} \frac{1}{N}\sum_{i=1}^N \frac{1}{n_i}\sum_{j=1}^{n_i} \ell (\bmf({P}_{X}^i,{X}_{ij}), Y_{ij}) - {\varepsilon}(\bmf),
\end{equation*}
which can be bounded by considering the following decomposition
    \begin{align}
        (II)^\prime & \leq
        \begin{multlined}[t]
                    \sup_{\bmf\in\fcal_{p,R}} \frac{1}{N}\sum_{i=1}^N \Big(\frac{1}{n_i}\sum_{j=1}^{n_i} \ell (\bmf({P}_{X}^i,{X}_{ij}), Y_{ij})
                        - \bE \bbrac{ \ell( \bmf( \widetilde{X}),Y ) \big| P_{XY}^i }\Big) \notag\\
                    + \sup_{\bmf\in\fcal_{p,R}} \frac{1}{N} \sum_{i=1}^N  \Big( \bE \bbrac{ \ell( \bmf( \widetilde{X}),Y ) \big| P_{XY}^i } - \bE \bbrac{ \ell( \bmf( \widetilde{X}),Y ) } \Big)
                \end{multlined}\notag\\
             & =: (IIa) + (IIb).\label{II-decomposition}
\end{align}

\paragraph{Control of Term (IIa).} Conditional to \(P_{XY}^1\),...,\(P_{XY}^N\) , the random variables \((X_{ij}, Y_{ij})_{ij}\) are independent (not identically distributed). Introduce a random variable
\begin{equation*}
    \zeta ((X_{ij}, Y_{ij})_{ij}) = \sup_{\bmf\in\fcal_{p,R}} \frac{1}{N}\sum_{i=1}^N \Big(\frac{1}{n_i}\sum_{j=1}^{n_i} \ell (\bmf({P}_{X}^i,{X}_{ij}), Y_{ij}) - \bE \bbrac{ \ell( \bmf( \widetilde{X}),Y ) \big| P_{XY}^i }\Big).
\end{equation*}
{
By the Lipschitz continuity of $\ell$, for any $\bmf\in\fcal_{p,R},P_X,X,Y$, we have
\begin{align*}
  \big|\ell(\bmf(P_X,X).Y)\big| & \leq \ell(0,Y)+\big|\ell(\bmf(P_X,X),Y)-\ell(0,Y)\big| \\
   & \leq B_Y + L_\ell\|\bmf(P_X,X)-0\|_\infty = B_Y+L_\ell\max_{m=1,\ldots,c}\big|\langle \bmf,\bar{k}((P_X,X),\cdot)\big| \\
   & \leq B_Y+L_\ell\max_{m=1,\ldots,c}\|\bmf\|_{\bar{k}}B_{\bar{k}}\leq B_Y+L_\ell RB_\kappa B_k:=B_\ell.
\end{align*} 
}
According to Azuma-McDiarmid's inequality~\citet{mcdiarmid1989method}, we derive the following inequality with probability at least \(1-\delta\) that
\begin{equation*}
    \zeta - \bE [\zeta | (P^i_{XY})_{1 \leq i \leq N}] \leq B_\ell\sqrt{\frac{\log(1/\delta)}{2Nn}}.
\end{equation*}
Next we bound the expectation term using standard Rademacher complexity analysis and then applying the extension to Talagrand's convex concentration inequality (see \citet{bartlett2002rademacher} theorem 7 and lemma 22). Let \((\epsilon_{ij})_{1 \leq i \leq N, 1 \leq j \leq n_i}\) be i.i.d Rademacher random variables.
\begin{equation*}
    \begin{aligned}
        \bE &[\zeta | (P^i_{XY})_{1 \leq i \leq N} ] \\
            &= \bE_{(X_{ij},Y_{ij})} \bbbrac{ \sup_{\bmf\in\fcal_{p,R}} \frac{1}{N}\sum_{i=1}^N \Big(\frac{1}{n_i}\sum_{j=1}^{n_i} \ell (\bmf({P}_{X}^i,{X}_{ij}), Y_{ij}) - \bE \bbrac{ \ell( \bmf( \widetilde{X}),Y ) \big| P_{XY}^i } \Big)\bigg| (P^i_{XY})_{1 \leq i \leq N} } \\
            &\leq \frac{2}{N} \bE_{(X_{ij},Y_{ij})} \bE_{(\epsilon_{ij})} \bbbrac{ \sup_{\bmf\in\fcal_{p,R}} \sum_{i=1}^N \frac{1}{n_i}\sum_{j=1}^{n_i} \epsilon_{ij} \ell (\bmf({P}_{X}^i,{X}_{ij}), Y_{ij}) \bigg| (P^i_{XY})_{1 \leq i \leq N} }.
    \end{aligned}
\end{equation*}
Recall that $n_1=\cdots=n_N=n$. The expectation of the term in the bracket over $\epsilon_{ij}$ is just the empirical Rademacher complexities of the class $\big\{(\widetilde{X},Y)\mapsto\ell(\bmf(\widetilde{X}),Y):\bmf\in\fcal_{p,R}\big\}$ w.r.t. the sample $\widetilde{S}:=\big\{(P_X^i,X_{ij},Y_{ij})\big\},i=1,\ldots,N,j=1,\ldots,n_i$..
We can apply Lemma \ref{lem:RC-MC}  with $m=Nn$ to show that
\begin{align*}
 &\frac{1}{Nn}\bE_{(\epsilon_{ij})} \bbbrac{ \sup_{\bmf\in\fcal_{p,R}} \sum_{i=1}^N \sum_{j=1}^{n_i} \epsilon_{ij} \ell (\bmf({P}_{X}^i,{X}_{ij}), Y_{ij}) \bigg| (P^i_{XY})_{1 \leq i \leq N} }\\
 & = \mathfrak{R}_{\widetilde{S}}\Big((P_X,X,Y)\mapsto\ell(\bmf(P_X,X),Y):\bmf\in\fcal_{p,R}\Big)\\
 & \leq 16L_\ell\sqrt{\log2}RB_\kappa B_k \big(Nn\big)^{-\frac{1}{2}}c^{\frac{1}{2}-\frac{1}{\max\{2,p\}}}\big(1+\log^{\frac{3}{2}}\sqrt{2}Nnc\big),
\end{align*}
where we have used $\sup_{P_X,X}\bar{k}((P_X,X),(P_X,X))\leq B_\kappa^2 B_{k}^2$.
It then follows that
\[
\bE [\zeta | (P^i_{XY})_{1 \leq i \leq N} ] \leq 27L_\ell RB_\kappa B_k \big(Nn\big)^{-\frac{1}{2}}c^{\frac{1}{2}-\frac{1}{\max\{2,p\}}}\big(1+\log^{\frac{3}{2}}\sqrt{2}Nnc\big),
\]
where we have used $32\sqrt{\log2}\leq27$.

\paragraph{Control of term (IIb).}
 Introduce a random variable,
\begin{equation*}
    \xi((P_{XY}^i)_{1 \leq i \leq N}) = \sup_{\bmf\in\fcal_{p,R}} \frac{1}{N} \sum_{i=1}^N  \Big( \bE \bbrac{ \ell( \bmf( \widetilde{X}),Y ) \big| P_{XY}^i } - \bE \bbrac{ \ell( \bmf( \widetilde{X}),Y ) } \Big).
\end{equation*}
Since \((P_{XY}^i)_{1 \leq i \leq N}\) are i.i.d we can apply Azuma-McDiarmid inequality~\citet{mcdiarmid1989method} to \(\xi\) to obtain the following inequality with probability $1-\delta$
\begin{equation*}
     \xi - \bE[\xi]  \leq B_{\ell} \sqrt{ \frac{\log(1/\delta)}{2N} }.
\end{equation*}

According to the standard symmetrization trick in relating the deviation between empirical means from expectations to Rademacher complexities, we also have
\begin{equation*}
    \begin{aligned}
        \bE[\xi] & =
        \begin{aligned}[t]
            \bE_{(P_{XY}^i)_{1 \leq i \leq N}} \bbbrac{ \sup_{\bmf\in\fcal_{p,R}} \frac{1}{N} \sum_{i=1}^N  \Big(\bE_{(X,Y) \sim P_{XY}^i}
                & \bbrac{ \ell( \bmf( \widetilde{X}),Y ) }
                - \bE_{ P_{XY} \sim \mu, (X,Y) \sim P_{XY} } \bbrac{ \ell( \bmf( \widetilde{X}),Y ) } \Big) }
        \end{aligned} \\
        & \leq \frac{2}{N} \bE_{(P_{XY}^i)_{1 \leq i \leq N}} \bE_{(\epsilon_i)_{1 \leq i \leq N}}
            \bbbrac{ \sup_{\bmf\in\fcal_{p,R}} \sum_{i=1}^N \epsilon_i \bE_{(X_i,Y_i) \sim P_{XY}^i} \bbrac{ \ell( \bmf( \widetilde{X}_i),Y_i ) } } \\
        & \leq \frac{2}{N} \bE_{(P_{XY}^i)_{1 \leq i \leq N}} \bE_{(X_i,Y_i) \sim P_{XY}^i} \bE_{(\epsilon_i)_{1 \leq i \leq N}}
            \bbbrac{ \sup_{\bmf\in\fcal_{p,R}} \sum_{i=1}^N \epsilon_i \bbrac{ \ell( \bmf( \widetilde{X}_i),Y_i ) } },
    \end{aligned}
\end{equation*}
where the last step is due to Jensen's inequality.
An application of Lemma \ref{lem:RC-MC} shows that
\[
\frac{1}{N}\bE_{(\epsilon_i)_{1 \leq i \leq N}}
            \bbbrac{ \sup_{\bmf\in\fcal_{p,R}} \sum_{i=1}^N \epsilon_i \bbrac{ \ell( \bmf( \widetilde{X}_i),Y_i ) } }
            \leq 16L_\ell\sqrt{\log2}RB_{\kappa}B_k N^{-\frac{1}{2}}c^{\frac{1}{2}-\frac{1}{\max\{2,p\}}}\big(1+\log^{\frac{3}{2}}\sqrt{2}Nc\big).
\]
Combining the above two inequalities together and using $32\sqrt{\log2}\leq27$, we derive the following inequality for $\mathbb{E}[\xi]$
\[
\bE[\xi]\leq 27L_\ell RB_{\kappa}B_k N^{-\frac{1}{2}}c^{\frac{1}{2}-\frac{1}{\max\{2,p\}}}\big(1+\log^{\frac{3}{2}}\sqrt{2}Nc\big).
\]

Combining terms (IIa) and (IIb), we derive the following inequality with probability at least $1-\delta$
\begin{align}
        (II)^\prime & \leq  27L_\ell RB_\kappa B_k c^{\frac{1}{2}-\frac{1}{\max\{2,p\}}}\Big(\big(Nn\big)^{-\frac{1}{2}}\big(1+\log^{\frac{3}{2}}c\sqrt{2}Nn\big)+N^{-\frac{1}{2}}\big(1+\log^{\frac{3}{2}}\sqrt{2}Nc\big)\Big) + 2B_{\ell}\sqrt{\frac{\log (2/\delta)}{2N}} \notag\\
             & \leq 54L_\ell RB_\kappa B_k c^{\frac{1}{2}-\frac{1}{\max\{2,p\}}}N^{-\frac{1}{2}}\big(1+\log^{\frac{3}{2}}c\sqrt{2}N\big) + 2B_{\ell}\sqrt{\frac{\log (2/\delta)}{2N}}.\label{eq:second_term_boundprime}
\end{align}
The bound for term \((II)\) can be obtained by replacing \(\delta\) with \(\delta/2\) as in standard Rademacher complexity analysis \citep{mohri2012foundations}. Therefore, we obtain the following inequality with probability at least \(1-\delta\)
\begin{equation}
    \label{eq:second_term_bound}
        (II) \leq 54L_\ell RB_\kappa B_k c^{\frac{1}{2}-\frac{1}{\max\{2,p\}}}N^{-\frac{1}{2}}\big(1+\log^{\frac{3}{2}}c\sqrt{2}N\big) + 2B_{\ell}\sqrt{\frac{\log (4/\delta)}{2N}}.
\end{equation}


\subsection{Combination of bounds}
Plugging the equations \ref{eq:first_term_bound}, \ref{eq:second_term_bound} back into \eqref{error-decomposition}, we derive the following inequality with probability at least $1-\delta$
\begin{multline}
    \sup_{f \in \fcal_{p,R}}| \widehat{\varepsilon}(\bmf) - {\varepsilon}(\bmf)| \leq
     L_{\ell} L_{\kappa} R B_{k}B_{k^\prime}^{\alpha} \bigg( \sqrt{\frac{2 \log \frac{2N}{\delta}}{n}} + \sqrt{\frac{1}{n}} + \frac{4 \log \frac{2N}{\delta}}{3n} \bigg)^{\alpha}\\
     + 54L_\ell RB_\kappa B_k c^{\frac{1}{2}-\frac{1}{\max\{2,p\}}}N^{-\frac{1}{2}}\big(1+\log^{\frac{3}{2}}c\sqrt{2}N\big) + 2B_{\ell}\sqrt{\frac{\log 8\delta^{-1}}{2N}},
 \end{multline}
 which can be written as the stated form.
The proof is complete.

\vskip 0.2in
\bibliographystyle{plainnat}

\bibliography{references}

\end{document}